\theoremstyle{definition}
\newtheorem{definition}{Definition}
\newtheorem{theorem}{Theorem}
\newtheorem{proposition}{Proposition}
\definecolor{mydarkblue}{rgb}{0,0.08,0.45}
\pgfplotsset{compat=1.18}
\tikzset{auto, >=stealth}
\tikzset{every edge/.append style={shorten >= 1pt}}
\tikzset{
    main node/.style={circle,draw,minimum size=1cm,inner sep=0pt},
}
\Crefname{algocfline}{Algorithm}{Algorithms}
\Crefname{algocf}{line}{lines}
\begin{document}

\title{Real-Time Privacy Preservation \\ for Robot Visual Perception}


\author{\authorblockN{Minkyu Choi$^{*}$, Yunhao Yang$^{*}$, Neel P. Bhatt$^{*}$, Kushagra Gupta, Sahil Shah, Aditya Rai, \\ David Fridovich-Keil, Ufuk Topcu, Sandeep P. Chinchali}
\authorblockA{The University of Texas at Austin\\
Austin, Texas, USA}
}



%

\maketitle

\IEEEpeerreviewmaketitle
\begin{abstract} 
Many robots (e.g., iRobot's Roomba) operate based on visual observations from live video streams, and such observations may inadvertently include privacy-sensitive objects, such as personal identifiers.
Existing approaches for preserving privacy rely on deep learning models, differential privacy, or cryptography. They lack guarantees for the complete concealment of all sensitive objects. Guaranteeing concealment requires post-processing techniques and thus is inadequate for real-time video streams. We develop a method for privacy-constrained video streaming, \texttt{PCVS}, that conceals sensitive objects within real-time video streams. \texttt{PCVS} takes a logical specification constraining the existence of privacy-sensitive objects, e.g., never show faces when a person exists. It uses a detection model to evaluate the existence of these objects in each incoming frame. Then, it blurs out a subset of objects such that the existence of the remaining objects satisfies the specification. We then propose a conformal prediction approach to (i) establish a theoretical lower bound on the probability of the existence of these objects in a sequence of frames satisfying the specification and (ii) update the bound with the arrival of each subsequent frame. 
Quantitative evaluations show that \texttt{PCVS} achieves over 95 percent specification satisfaction rate in multiple datasets, significantly outperforming other methods. The satisfaction rate is consistently above the theoretical bounds across all datasets, indicating that the established bounds hold. Additionally, we deploy \texttt{PCVS} on robots in real-time operation and show that the robots operate normally without being compromised when \texttt{PCVS} conceals objects.
\end{abstract}
\newcommand{\saftyproperty}{P_{\text{safe}}}
\newcommand{\spec}{\Phi}
\newcommand{\vlm}{\mathcal{M}_{vl}}
\newcommand{\prob}{\mathbb{P}}
\newcommand{\pgs}{$\mathcal{PG}_k(\spec)$}
\newcommand{\pcvs}{\texttt{PCVS}}

\def\thefootnote{*}\footnotetext{These authors contributed equally to this work.}
\section{Introduction}

While robots utilize visual observations from video streams during operational routines for decision-making purposes, recording and disseminating such videos potentially exposes private information \cite{HMDB}, raising ethical and legal concerns. These concerns include risks of the inadvertent capture of sensitive personal data, unauthorized access, and misuse of recorded footage. 
A recent story highlighting a Roomba taking images of a person in a toilet room attests to the legitimacy of privacy concerns during robotic operations \cite{roomba}.
\begin{figure}[!t]
    \includegraphics[width=\linewidth]{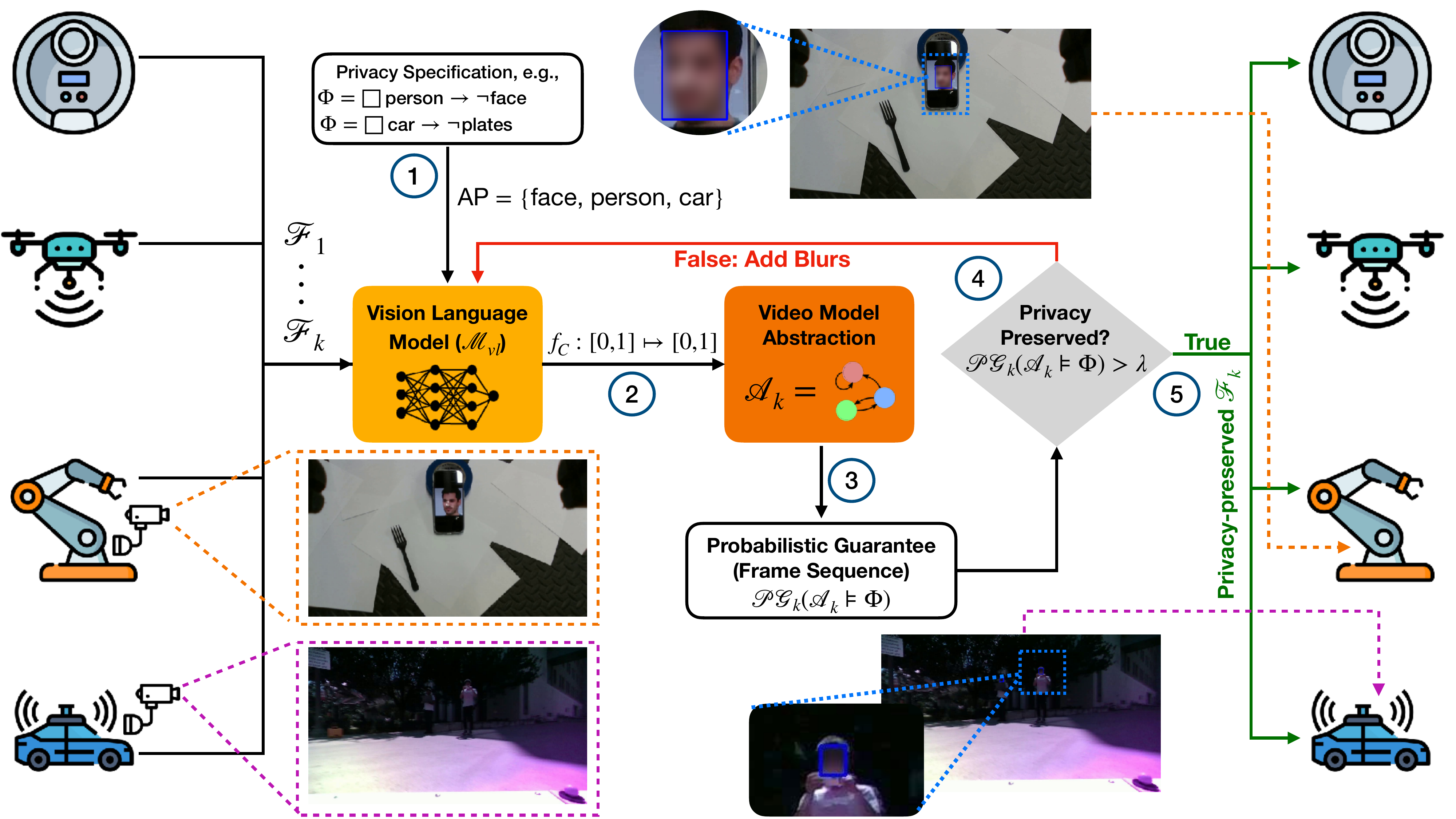}
    \caption{\textbf{Pipeline of Privacy-Constrained Video Streaming:} 
    \textbf{(Step 1)} Given a privacy specification $\Phi$, we define a set $AP$ of atomic propositions describing privacy-sensitive objects. 
    \textbf{(Step 2)} Given an incoming frame $\mathcal{F}_k$ from the video, the method uses a vision-language model (VLM) to detect sensitive objects in the frame. Each detection is associated with a confidence score from the VLM. The method calibrates a confidence score to a per-frame probability bound for correct detection via a calibration function $f_C$, as in Equation \ref{eq: calibration}.
    \textbf{(Step 3)} The method builds an abstract model $\mathcal{A}_k$ representing object detections and their probability bounds in the frame sequence $\mathcal{F}_1,...,\mathcal{F}_k$ via Algorithm \ref{algo: rt-abstract}. Then, it computes a theoretical bound for the probability of $\mathcal{A}_k$ satisfying $\Phi$, i.e., a probabilistic guarantee $\mathcal{PG}_k(\mathcal{A}_k \models \Phi)$ using Equation \ref{eq: sequence-guarantee}.
    \textbf{(Step 4)} If $\mathcal{PG}_k(\mathcal{A}_k \models \Phi)$ is below a user-given privacy threshold $\lambda$, the method removes a subset of sensitive objects and goes back to Step 2 to recompute a guarantee.
    \textbf{(Step 5)} If $\mathcal{PG}_k(\mathcal{A}_k \models \Phi)$ is above $\lambda$, the method adds $\mathcal{F}_k$ back to the stream and proceeds to Step 1 with the next incoming frame.
    We number each step in blue.
    }
    \label{fig:teaser_figure}
\end{figure}

Existing approaches protect privacy by concealing sensitive objects, but they either fail to guarantee complete concealment or cannot process real-time video streams.
While concealing sensitive objects requires detecting and locating such objects, existing works rely on deep-learning models for object detection \cite{padmanabhan2023gemel, sugianto2024privacy, kagan2023zooming}. However, due to their black-box nature, deep-learning models cannot provide theoretical guarantees on the correctness of the detection results. On the other hand, formal methods techniques, such as model checking, can guarantee that a given video adheres to privacy concerns \cite{umili2022grounding, yang2023specification, choi2024neuro}. However, the computational complexity of formal methods techniques grows with the video length, hence incapable of real-time video streams.



We develop a method to conceal privacy-sensitive objects in real-time video streams from robot cameras. The method takes a logical specification constraining the existence of sensitive objects. The specifications allow users to describe complex privacy requirements with conjunctions, disjunctions, implications, etc. For each incoming frame from the video streams, the method first uses a vision-language model to detect and locate all sensitive objects. Next, it removes a subset of objects (add Gaussian blurs or blackout) so that the existence of the remaining objects satisfies the specification.

We then establish a theoretical bound on the probability of complete concealment of sensitive objects in a video stream.
As deep learning models are typically over-confident in detecting objects, we use conformal prediction to calibrate the model's confidence to a probability of correct detection. 
Next, we express the specification as a temporal logic formula and build a finite automaton representing the object detections in a sequence of frames and the probabilities of correct detections. Then, we compute the probability that this automaton satisfies the specification (i.e., the video frames encountered so far preserve privacy). 

We develop a video abstraction algorithm that allows us to optimize the computational complexity involved with the arrival of each subsequent frame from the video stream. This abstraction is key to our method achieving
real-time performance, updating the probability with each frame arrival. This probability acts as a metric and helps users determine whether to use the video based on their privacy tolerance.

We evaluate the method over two large-scale datasets and present real-robot examples for real-time privacy protection. The method achieves 80 to 97 percent specification satisfaction rates in various scenarios, significantly outperforming existing automated solutions. Meanwhile, the method preserves all non-sensitive information. By seamlessly integrating concealment capabilities into the robot's visual perception system, we prevent potential privacy leakage from the robot. Simultaneously, this integration ensures the unhindered functionality of the robot's control policies, enabling it to operate normally without compromise.

\section{Related Work}

Privacy preservation in real-time video analytics has been the focus of several recent methods \cite{padmanabhan2023gemel, sugianto2024privacy, kagan2023zooming, chu2013real, wickramasuriya2005privacy, wang2017scalable, neff2019revamp, yuan2020minor, upmanyu2009efficient}. However, they rely solely on deep learning models for object detection, i.e., detecting and blurring privacy-sensitive entities in video. Due to the black-box nature of neural networks, these methods lack a quantitative guarantee.

To this end, formal verification approaches have guaranteed that a given complete video adheres to privacy safety concerns formulated as temporal logical specifications. For example, recent works \cite{umili2022grounding, yang2023specification, choi2024neuro, cheng2014temporal, sharan2024neurosymbolicevaluationtexttovideomodels, yang2024fine} construct a finite automaton representing video frame sequences and verify this automaton against temporal logical specifications. However, their approaches do not account for uncertainties related to the vision-based detection algorithms \cite{bhatt2024know}. Moreover, the construction and verification of automatons cannot be done in real-time.

On the other hand, some works using differential privacy or cryptography do not rely on deep learning models and, hence, can provide theoretical guarantees. For instance, Cangialosi et al. \cite{cangialosi2022privid} developed a differential privacy mechanism to protect video privacy, and Rahman et al. \cite{cryptography1} propose a cryptographic approach for video privacy. However, without integrating with deep learning models, these methods cannot interpret and enforce complex privacy requirements. In contrast, our method enforces the video satisfying any complex privacy requirements expressed in logic formulas.

\section{Problem Formulation}

A video \(\mathcal{V}\)  is a sequence of frames $\mathcal{F}_1, ...,\mathcal{F}_k$ where each $\mathcal{F}_k \in \mathbb{R}^{C \times  W\times H}$ is an RGB image with $C$ channels, $W$ width, and $H$ height. A video can be prerecorded or live-streamed from sources such as autonomous vehicles or security cameras.

We define a \textbf{privacy specification} \(\spec\) as a temporal logic formula \cite{rescher2012temporal} constraining the appearance of privacy-sensitive objects. 
Since we want to preserve privacy at all times, we express a privacy specification as $\Phi = \Box(\Tilde{\Phi})$, where $\Box$ represents the ``ALWAYS" temporal operation and $\Tilde{\Phi}$ is a first-order logic formula \cite{barwise1977introduction}. The presence of privacy-sensitive objects is constrained by \(\spec\).

We define a set of atomic propositions $AP$, where each proposition $p_i \in AP$ is a textual description of a privacy-sensitive object. 
Then, we use a vision-language model (VLM), \(\vlm\), to detect these objects.
\(\vlm : \mathbb{R}^{C \times  W\times H} \times AP \rightarrow [0, 1]\) takes a frame $\mathcal{F}_k \in \mathbb{R}^{C \times  W\times H}$ and a proposition $p_i \in AP$ as inputs, and returns a confidence score $c \in [0,1]$, denoted as $c = \vlm (\mathcal{F}_k, p_i)$. However, deep learning models are often overconfident, and their detection accuracy cannot be guaranteed. 
Therefore, we calibrate the confidence using conformal prediction \cite{conf-pred}, which provides a lower bound for the probability of correctly detecting privacy-sensitive objects in every frame, considering the inherent uncertainty in deep learning model predictions.

However, traditional conformal prediction approaches focus on post-processing and do not account for temporal events.
Therefore, we use calibrated confidence to detect and constrain privacy-sensitive objects over time and provide a probabilistic guarantee on a sequence of frames.

To achieve this, we develop an algorithm $\mathcal{VA}$ that takes a sequence of $k$ frames and returns a formally verifiable video abstraction $\mathcal{A}_k$ encoding the object detection across the sequence: $\mathcal{VA}([\mathcal{F}_1, ...,\mathcal{F}_k]) = \mathcal{A}_k$. The \textbf{video abstraction} $\mathcal{A}_k$ is represented as a labeled Markov chain, detailed rigorously in Section \ref{sec: method} as it requires extensive background and mathematical notation. This provides a probabilistic guarantee on a frame sequence via formal verification \cite{formalmethod}.

\begin{definition}[\textbf{Probabilistic Guarantee on a Frame Sequence}]\label{def:video_sequence_guarantee}
Given a sequence of frames $\mathcal{F}_1, ...,\mathcal{F}_k$, a privacy specification \(\spec\), and a video abstraction \( \mathcal{A}_k \) at the $k^{\text{th}}$ frame, a probabilistic guarantee $\mathcal{PG}_k(\mathcal{A}_k \models \spec)$ on the frame sequence $\mathcal{F}_1, ..., \mathcal{F}_k$ represents the theoretical minimum probability that the presence of privacy-sensitive objects in the frame $\mathcal{F}_1$ through $\mathcal{F}_k$ adheres to \(\spec\).
\end{definition}

\textbf{Problem 1 (Real-Time Video Privacy Preservation).} 
Given a frame sequence $\mathcal{F}_1,...,\mathcal{F}_{k-1}$, an incoming frame $\mathcal{F}_k$ from a video stream, a privacy specification $\spec$, and an algorithm $\mathcal{VA}$ that builds a video abstraction from the frame sequence, we aim to remove privacy-sensitive objects in $\mathcal{F}_k$ such that $\mathcal{A}_k = \mathcal{VA}([\mathcal{F}_1, ...,\mathcal{F}_k])$ satisfies $\spec$ with a probability at least $\mathcal{PG}_k(\mathcal{A}_k \models \spec)$. 




\section{Privacy-Constrained Video Streaming}\label{sec: real-time-privacy}
\label{sec: method}

We develop privacy-constrained video streaming (\texttt{PCVS}), a method to enforce live video streams that satisfy a user-given privacy specification with a probabilistic guarantee. 
The overall pipeline for \texttt{PCVS} is illustrated in Figure \ref{fig:teaser_figure}.

\paragraph{Real-Time Video Privacy Preservation Framework} 
We explain our framework with a running example in a real-time video stream from a real robot (see \Cref{fig: example}). We aim to hide human faces so that no personal identity will be revealed in vision-based robot operations.
Therefore, the privacy specification is \(\spec = \lalways \text{person} \rightarrow \neg \text{face}\), where $\rightarrow$ and $\neg$ mean ``implies'' and ``not,'' respectively.
We detect humans and faces at every frame via the VLM.
Subsequently, we use conformal prediction to obtain a calibrated confidence score for the detection in the current frame. We build a video abstraction $\mathcal{A}_k$ to represent the detection results for humans and faces across a sequence of frames and utilize it to obtain a probabilistic guarantee on \(\spec\) being satisfied. 
We then verify if the guarantee is above the user-given privacy threshold $\lambda \in [0, 1]$. 
If this threshold is not met, we iteratively remove the detected faces and update the guarantee $\mathcal{PG}_k(\mathcal{A}_k \models \spec)$ until the threshold is met.
\subsection{Probabilistic Guarantee on Video Privacy}

Given a sequence of $k$ frames and a privacy specification $\Phi$, we compute a probabilistic guarantee $\mathcal{PG}_k(\mathcal{A}_k \models \spec)$. This guarantee is updated at each incoming frame. We use formal methods to prove that the guarantee holds.

\begin{figure}[t]
  \centering
    \includegraphics[width=\linewidth]{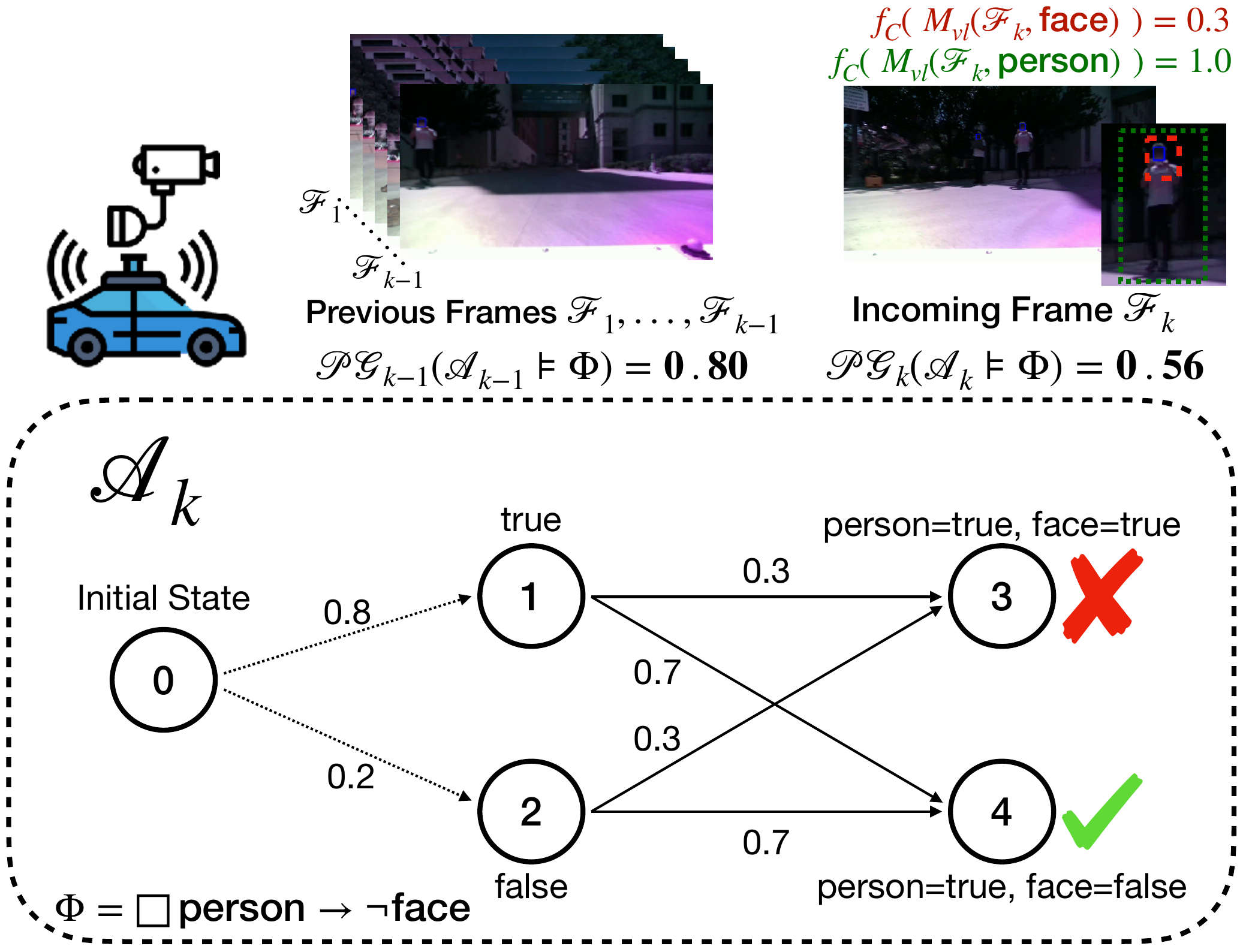}
    \caption{A running example on how to compute the probabilistic guarantee via video abstraction.}
    \label{fig: example}
\end{figure}
\paragraph{Confidence Calibration via Conformal Prediction}
Recall that a VLM \(\vlm(x_i,y_i) = c\) receives an image $x_i$ and a textual object label $y_i$ as a prompt and returns a confidence score $c \in [0, 1]$. Given the VLM \(\vlm\) and a labeled calibration dataset that is distributed identically to the task domain, using conformal prediction \cite{conf-pred}, we learn a calibration function $f_C : [0,1] \mapsto [0,1]$ that maps a confidence score, $c \in [0,1]$ to a lower bound for the probability of correct detection. 

We first collect a calibration set $\{(x_i, y_i)\}_{i=1}^m$ consisting of $m$ (image, ground truth text label) tuples. 
Then, we apply $\vlm$ to detect the privacy-sensitive objects in the images $\{x_i\}_{i=1}^m$ and get a set of \emph{nonconformity scores}: $\{1 - \vlm(x_i, y_i) \}_{i=1}^m$. A nonconformity score is the sum of confidence scores of wrong detections. Next, we estimate a probability density function of these scores, denoted as $f_{nc}(z)$, where $z$ is a nonconformity score. Then, we use Theorem \ref{thm: CP} to establish a theoretical lower bound for the probability of the correct detection.

\begin{theorem}
\label{thm: CP}
    Let $\epsilon \in [0, 1]$ be a pre-defined error bound and $x_{n}$ be an image outside the calibration set. We define a \emph{prediction band} as $\hat{C}(x_{n}) = \{p_i : \vlm(x_{n}, p_i) \ge 1 - c^*, p_i \in AP \}$. Then, according to conformal prediction, there exists a confidence $c^*$ such that $\epsilon = 1 - \int_0^{c^*} f_{nc}(z) dz$  satisfies $\mathbb{P}\left[y_{n} \in \hat{C}(x_{n}) \right] \ge 1 - \epsilon$, where $y_{n}$ is the ground truth label for $x_{n}$. Proof in \cite{conf-pred}. 
\end{theorem}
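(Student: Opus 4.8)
The plan is to collapse the coverage claim into a one-dimensional quantile statement about the nonconformity scores and then invoke the exchangeability argument underlying split conformal prediction. First I would unfold the definition of the prediction band: since $p_i \in \hat{C}(x_n) \iff \vlm(x_n, p_i) \ge 1 - c^*$, substituting the true label $y_n$ shows that the event $y_n \in \hat{C}(x_n)$ is \emph{identical} to the event $1 - \vlm(x_n, y_n) \le c^*$. Writing $s_n := 1 - \vlm(x_n, y_n)$ for the (unobserved) test nonconformity score, the quantity of interest becomes $\mathbb{P}[s_n \le c^*]$, so the entire theorem reduces to bounding the probability that the test score lands below the threshold $c^*$.

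Next I would make the distributional hypothesis explicit. Because the calibration tuples $\{(x_i,y_i)\}_{i=1}^m$ and the test tuple $(x_n,y_n)$ are drawn exchangeably from the same distribution, the test score $s_n$ shares the density $f_{nc}$ with the calibration scores $\{1 - \vlm(x_i,y_i)\}_{i=1}^m$. By construction $c^*$ is exactly the point at which $\int_0^{c^*} f_{nc}(z)\,dz = 1 - \epsilon$, i.e.\ the $(1-\epsilon)$-quantile of the nonconformity distribution; its existence follows from the intermediate value theorem applied to the continuous cumulative distribution function of $f_{nc}$. Hence $\mathbb{P}[s_n \le c^*] = \int_0^{c^*} f_{nc}(z)\,dz = 1 - \epsilon$, which already delivers $\mathbb{P}[y_n \in \hat{C}(x_n)] \ge 1 - \epsilon$ in the population limit.

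To recover the finite-sample inequality that conformal prediction guarantees rather than a mere equality, I would replace the population quantile by the empirical one built from the $m$ calibration scores and apply the standard rank argument: under exchangeability the rank of $s_n$ among the $m+1$ scores is uniform over $\{1,\dots,m+1\}$, so setting the threshold at the $\lceil (m+1)(1-\epsilon)\rceil$-th order statistic forces $\mathbb{P}[s_n \le c^*] \ge \lceil (m+1)(1-\epsilon)\rceil / (m+1) \ge 1 - \epsilon$. This is precisely the origin of the $\ge$ appearing in the statement.

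The hard part will not be the algebra but justifying exchangeability and bridging the continuous-density formulation with the finite-sample bound: one must argue that estimating $f_{nc}$ from a finite calibration set and integrating it does not erode the coverage, the cleanest resolution being to express $c^*$ through the empirical order statistics so the distribution-free rank argument applies directly, with ties resolved by the usual convention. Since this is the classical split-conformal coverage result, the remaining bookkeeping is exactly what \cite{conf-pred} supplies, which the statement already defers to.
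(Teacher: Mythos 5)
Your proposal is correct and matches the paper's approach: the paper offers no independent argument, deferring entirely to the classical split-conformal coverage result in \cite{conf-pred}, and your reduction of the coverage event to the quantile statement $1 - \vlm(x_n, y_n) \le c^*$ followed by the exchangeability/rank argument is exactly that standard proof. Your closing observation---that the paper's density-based definition of $c^*$ should be read as the empirical-quantile construction for the finite-sample bound to hold---is the right way to reconcile the statement with the cited guarantee.
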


Note that $\vlm (x_i, p_i)$ returns a single confidence score indicating whether $p_i$ exists in $x_i$. By the theory of conformal prediction, $1 - \epsilon$ is a theoretical lower bound for the probability of the ground truth label belonging to the prediction band $\hat{C}(x_{n})$. 
If $\vlm (x_i, p_i) > 0.5$, we provide a lower bound for the probability of the existence of $p_i$. Otherwise, if $\vlm(x_i, p_i) \leq 0.5,$ we bound the probability of non-existence.
Hence, we get a calibration function 
\begin{equation}
\label{eq: calibration}
    f_C (c) = \begin{cases}
        \int_0^{c} f_{nc}(z) dz, \text{ if } c > 0.5 \\
        \int_0^{1 - c} f_{nc}(z) dz, \text{ otherwise.} 
    \end{cases}
\end{equation}

\paragraph{Video Abstraction.} 

For verifying a \emph{real-time} video stream against the privacy specification \(\spec\), a key challenge is to verify the temporal behaviors of \emph{all} the previously received frames plus the current frame. This makes verification space- and time-inefficient because we must repeatedly verify previous frames for each new incoming frame. To overcome this challenge, we build an abstraction for the video stream, which enables real-time verification.

\begin{definition}[\textbf{Video Abstraction}]\label{def: dtmc} A video abstraction is a labeled Markov chain \((S, s_0, P, L)\), where $S$ is a set of states, each state corresponds to a conjunction of atomic propositions, \(s_0 \in S\) is the initial state, $P: S\times S\rightarrow [0,1]$ is a transition function. $P(s, s')$ represents the probability of transition from a state $s$ to a state $s'$ and $\sum_{s' \in S}P(s, s') = 1$. $L:S\rightarrow 2^{AP}$ is a labelling function.
\end{definition}

\begin{algorithm}
    \caption{Real-Time Video Abstraction}
    \label{algo: rt-abstract}
    \begin{algorithmic}[1]
    \Require $\text{vision-language model }\vlm, \text{calibration function} f_C,$ 
    \Statex $\text{propositions set }AP,\text{\,specification }\,\Phi, \text{ probability }p_{k-1}$
    \Statex $\text{of previous frames satisfying}\Phi, \text{incoming frame}\, \mathcal{F}_{k}$
    
    \State $S_{\mathrm{obs}}, P, L$ = \{\}, \{\}, \{\} \Comment{Initialize the abstraction}
    \State $S_{\mathrm{obs}}$.add(0), $S_{\mathrm{obs}}$.add(1), $S_{\mathrm{obs}}$.add(2) \Comment{We represent each state with an Arabic numeral}
    \State $L(1) = \mathrm{false}$, $L(2) = \mathrm{true}$ 
    

    \State $P(0, 1) = 1 - p_{k-1}$,  $P(0, 2) = p_{k-1}$ \Comment{Add transitions to indicate the probability of previous frames satisfying $\Phi$}

    \State $i = 3$ \Comment{Initialize a indexer representing states}

    \For{$\sigma$ in $2^{AP}$} \Comment{$\sigma$ is a conjunction of atomic propositions}
        \State prob = $\prod_{p \in \sigma} f_C(\vlm (\mathcal{F}_{k}, p))$ \Comment{Get a lower bound for a detection result}
        \If{ prob $> 0$ } \Comment{Add a state to represent the detection with the lower bound}
            \State $S_{\mathrm{obs}}$.add($i$), $L(i) = \sigma$, $P(1, i) =$ prob, $P(2, i) =$ prob, $i = i + 1$
        \EndIf
    \EndFor
    
    \end{algorithmic}
    \Return{$S_{\mathrm{obs}}, s_0=0, P, L$}
\end{algorithm}

We propose Algorithm \ref{algo: rt-abstract} to build video abstractions. We demonstrate it through an example in Figure \ref{fig: example}. First, we add an initial state (State 0 in Figure \ref{fig: example}), a state representing the event that all previous frames (if they exist) satisfy $\Phi$ (State 1), and a state representing the event that previous frames fail $\Phi$ (State 2), as in lines 1-3.
Next, we add transitions from State 0 to State 1 and to State 2 with the probability of previous frames satisfying \(\spec\) as in line 4. Then, we detect objects in the incoming frame $\mathcal{F}_k$ and get the probability bound for correct detection. For each conjunction of propositions (e.g., person=true and face=false), we build a state and add transitions to this state with the probability bound of correctly detecting objects described in this conjunction, as in lines 6-9. Hence, we obtain the video abstraction $\mathcal{A}_k$ (e.g., Figure \ref{fig: example}).

Following Algorithm \ref{algo: rt-abstract}, we incrementally add new states to the abstraction (rather than build a new one) with the arrival of each new incoming frame and check it against $\Phi$. Hence, this abstraction can be used to check video streams efficiently. Then, we theoretically prove that the probabilistic guarantee obtained through this abstraction holds.

\paragraph{Probabilistic Guarantees for Frame Sequence}
Given a video abstraction \(\mathcal{A} = (S, s_0, P, L)\), we define a \emph{path} \(\pi\) as a sequence of states starting from \(s_0\). The states evolve according to the transition function \(P\). A \emph{prefix} is a finite path fragment starting from \(s_0\). We define a \emph{trace} as $\psi = \text{trace}(\pi) = L(s_0)L(s_1)L(s_2)\dots$, where $s_0, s_1, s_2 ,... \in \pi$. Traces($\mathcal{A}$) denotes the set of all traces from $\mathcal{A}$. Each trace $\psi = L(s_0)L(s_1)L(s_2)\dots$ is associated with a probability $\mathbb{P}(\psi) = P(s_0, s_1) \times P(s_1, s_2) \times$...

The privacy specification is in the form of $\lalways \Tilde{\Phi}$. Hence, a privacy specification describes a \emph{safety property} \cite{baier2008principles}.
\begin{definition}[\textbf{Safety Property}]
\label{def: safe}
    A safety property $\psafe$ is a set of traces in $(2^{AP})^\omega$ ($\omega$ indicates infinite repetitions) such that for all traces $\psi \in (2^{AP})^\omega \backslash \psafe$, there exists a finite prefix $\hat{\psi}$ of $\psi$ such that
    $
        \psafe \cap \{\psi'\in(2^{AP})^\omega |\, \hat{\psi} \, \mathrm{is\, a\, prefix\, of\, }\psi'\} = \Phi.
    $
    $\hat{\psi}$ is a \emph{bad prefix} and $\mathrm{BadPref}(\psafe)$ is the set of all bad prefixes with respect to \(\psafe\).
\end{definition}
A video satisfies the privacy specification if its abstract representation $\mathcal{A}$ satisfies the safety property \(\psafe\), i.e., Traces$(\mathcal{A})\subseteq \psafe$. 
The probability that a video satisfies the specification is
\begin{equation} \label{eq: safe-prob}
\begin{split}
    \mathbb{P}[\mathcal{A}\mathrm{\, is\, safe}] &= \mathbb{P}[\pi\in\mathrm{\,path}(s_0) \,|\, \mathrm{trace}(\pi)\in \psafe]\\ 
    &= \sum_{\psi \in \mathrm{Traces}(\mathcal{A})\cap \psafe} \mathbb{P}(\psi).
\end{split}
\end{equation}
Note that this probability is a probabilistic guarantee on a sequence of frames. According to the definition of safety property, we derive the following theorem:
\begin{theorem}\label{thm1}
    Consider a set of prefixes $\hat{\Psi}$ such that $\mathbb{P}\{\hat{\psi}\in\hat{\Psi} \,|\,\hat{\psi} \notin \mathrm{BadPref}(\psafe)\}\geq \alpha$. Let $\Bar{S}\subset S$ be a subset of states in $\mathcal{A}$ such that
    $\mathbb{P}\{\hat{\psi}L(s)\notin \mathrm{BadPref}(\psafe) \,|\,\hat{\psi} \notin \mathrm{BadPref}(\psafe)\mathrm{\,and\,} s \in \Bar{S}\}\geq \beta$. Then,
$\mathbb{P}\{\hat{\psi}L(s)\notin \mathrm{BadPref}(\psafe) \,|\,s \in \Bar{S} \mathrm{\,and\,} \hat{\psi}\in \hat{\Psi}\}\geq \alpha \beta.$ 
\end{theorem}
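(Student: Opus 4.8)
The plan is to turn the claim into a two-factor product bound via the chain rule for conditional probability, where each hypothesis controls one factor, and where a single structural fact about bad prefixes together with the history/current-frame independence built into the abstraction makes the factorization legitimate.

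First I would record the only combinatorial ingredient: $\mathrm{BadPref}(\psafe)$ is closed under extension. By Definition \ref{def: safe}, once a finite prefix admits no completion inside $\psafe$, neither does any of its extensions; hence appending the label $L(s)$ to a bad prefix again yields a bad prefix. Taking the contrapositive gives the inclusion
\begin{equation*}
\{\hat\psi L(s)\notin\mathrm{BadPref}(\psafe)\}\ \subseteq\ \{\hat\psi\notin\mathrm{BadPref}(\psafe)\},
\end{equation*}
so the event $\hat\psi\notin\mathrm{BadPref}(\psafe)$ may be conjoined to $\hat\psi L(s)\notin\mathrm{BadPref}(\psafe)$ without altering any probability that appears in the statement. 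This is what lets the conditioning event appearing in the second hypothesis ($\hat\psi\notin\mathrm{BadPref}(\psafe)$) re-enter the expression for free.

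Next I would expand the left-hand side of the conclusion with the chain rule, inserting the event above as an intermediate condition, so that it becomes the product of (i) $\mathbb{P}\{\hat\psi L(s)\notin\mathrm{BadPref}(\psafe)\mid \hat\psi\notin\mathrm{BadPref}(\psafe),\,s\in\Bar{S},\,\hat\psi\in\hat\Psi\}$ and (ii) $\mathbb{P}\{\hat\psi\notin\mathrm{BadPref}(\psafe)\mid s\in\Bar{S},\,\hat\psi\in\hat\Psi\}$. The abstraction of Algorithm \ref{algo: rt-abstract} assigns the current frame's transitions identically from both history states (States $1$ and $2$), i.e.\ $P(1,i)=P(2,i)$, so the current-frame state $s$ is probabilistically independent of the history prefix $\hat\psi$. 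Using this independence I would drop the $\hat\psi\in\hat\Psi$ condition from factor (i) and the $s\in\Bar{S}$ condition from factor (ii), matching (i) to the second hypothesis (bounded below by $\beta$) and (ii) to the first hypothesis (bounded below by $\alpha$); multiplying then yields $\alpha\beta$.

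The hard part, and where I would spend the care, is reconciling the \emph{directions} of the two conditionings: the hypotheses condition on $\hat\psi\notin\mathrm{BadPref}(\psafe)$, whereas the conclusion conditions on $\hat\psi\in\hat\Psi$. Bridging factor (ii) to the first hypothesis is therefore not a pure ``drop the condition'' step but a Bayes-type exchange between $\mathbb{P}\{\hat\psi\in\hat\Psi\mid\hat\psi\notin\mathrm{BadPref}(\psafe)\}$ and $\mathbb{P}\{\hat\psi\notin\mathrm{BadPref}(\psafe)\mid\hat\psi\in\hat\Psi\}$, which is clean only when the tracked prefixes $\hat\Psi$ carry no more probability mass than the non-bad prefixes (e.g.\ when $\hat\Psi$ consists of non-bad prefixes, so that the extension-closure inclusion applies again at the prefix level). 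I would make that assumption on $\hat\Psi$ explicit, verify it holds for the prefixes produced by the abstraction, and then the product bound $\alpha\beta$ follows; the remaining manipulations are routine conditional-probability bookkeeping rather than a substantive inequality.
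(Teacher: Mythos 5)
Your proposal takes essentially the same route as the paper's own proof: a chain-rule factorization of the target probability into two conditional factors, made legitimate by the prefix-closure inclusion $\{\hat\psi L(s)\notin\mathrm{BadPref}(\psafe)\}\subseteq\{\hat\psi\notin\mathrm{BadPref}(\psafe)\}$, with one factor matched to the $\alpha$-hypothesis and the other to the $\beta$-hypothesis. If anything, you are more careful than the paper, whose two-line proof ($\mathbb{P}(A)=\mathbb{P}(A\mid B)\,\mathbb{P}(B)\geq\alpha\beta$) silently performs both the condition-dropping that you justify via the abstraction's symmetry $P(1,i)=P(2,i)$ and the Bayes-type exchange between $\mathbb{P}\{\hat\psi\in\hat\Psi\mid\hat\psi\notin\mathrm{BadPref}(\psafe)\}$ and $\mathbb{P}\{\hat\psi\notin\mathrm{BadPref}(\psafe)\mid\hat\psi\in\hat\Psi\}$ that you rightly flag as requiring an explicit assumption on $\hat\Psi$.
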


\begin{proof}
    Let $A = \{\hat{\psi}L(s) \notin \mathrm{BadPref}(\psafe) \,|\,s \in \Bar{S} \cap \hat{\psi}\in \hat{\Psi}\} $ and $B = \{\hat{\psi}\notin \mathrm{BadPref}(\psafe) \, | \, \hat{\psi}\in \hat{\Psi}\}$. Then, $A|B = \{\hat{\psi}L(s) \notin \mathrm{BadPref}(\psafe) \,|\,s \in \Bar{S} \cap \hat{\psi}L(s) \notin \mathrm{BadPref}(\psafe)\}$ and $\mathbb{P}(A) = \mathbb{P}(A|B)\cdot \mathbb{P}(B) \geq \alpha\beta$.
\end{proof}

From Theorem \ref{thm1}, we can compute a new probabilistic guarantee on a sequence of frames after each incoming frame. However, the length of the abstraction's prefixes increases as the stream continues, leading to high complexity. Therefore, we derive the following proposition to show that Theorem \ref{thm1} holds even if we fix the length of the prefixes (proof of the proposition is in the Appendix):
\begin{proposition}
    \label{prop1}
    Let $\hat{\psi}_T$ and $\hat{\psi}_F$ be single element prefixes whose corresponding paths only consist of one state such that $\hat{\psi}_T \notin \mathrm{BadPref}(\psafe)$ and $\hat{\psi}_F \in \mathrm{BadPref}(\psafe)$. Let $\mathbb{P}(\hat{\psi}_T) = \alpha$, $\mathbb{P}(\hat{\psi}_F) = 1-\alpha$, $\Psi'=\{\psi_T, \psi_F\}$, then if we replace $\hat{\Psi}$ with $\Psi'$ in \Cref{thm1}, the Theorem still holds. 
\end{proposition}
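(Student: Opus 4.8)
The plan is to show that Proposition \ref{prop1} is essentially a reduction of Theorem \ref{thm1} to the simplest possible prefix structure, where the set of prefixes collapses to just two representatives---one ``good'' prefix $\hat{\psi}_T$ and one ``bad'' prefix $\hat{\psi}_F$---each corresponding to a single-state path. The key observation is that Theorem \ref{thm1} only requires the set $\hat{\Psi}$ to satisfy the aggregate condition $\mathbb{P}\{\hat{\psi}\in\hat{\Psi} \,|\,\hat{\psi} \notin \mathrm{BadPref}(\psafe)\}\geq \alpha$. So the strategy is to verify that $\Psi' = \{\hat{\psi}_T, \hat{\psi}_F\}$ satisfies exactly this hypothesis with the same $\alpha$, and then simply invoke Theorem \ref{thm1} verbatim.

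First I would compute the relevant probability for $\Psi'$. Since $\hat{\psi}_T \notin \mathrm{BadPref}(\psafe)$ and $\hat{\psi}_F \in \mathrm{BadPref}(\psafe)$, the event $\{\hat{\psi}\in\Psi' \,\text{and}\, \hat{\psi} \notin \mathrm{BadPref}(\psafe)\}$ is realized exactly by $\hat{\psi}_T$, while $\{\hat{\psi} \notin \mathrm{BadPref}(\psafe)\}$ over $\Psi'$ is also realized only by $\hat{\psi}_T$. With $\mathbb{P}(\hat{\psi}_T) = \alpha$ and $\mathbb{P}(\hat{\psi}_F) = 1-\alpha$ summing to one, the conditional probability $\mathbb{P}\{\hat{\psi}\in\Psi' \,|\,\hat{\psi} \notin \mathrm{BadPref}(\psafe)\}$ evaluates to $\alpha/\alpha = 1 \geq \alpha$, so the hypothesis of Theorem \ref{thm1} holds for $\Psi'$. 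The point of the proposition is that we can \emph{summarize} the entire history of received frames by a single binary outcome---whether the prefix so far is bad or not---with the correct aggregate probability mass, rather than tracking the growing collection of all actual prefixes. This is precisely what keeps the abstraction's state count bounded (the two summary states $1$ and $2$ in Algorithm \ref{algo: rt-abstract}).

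The second step is to confirm that this substitution is consistent with how the abstraction in Algorithm \ref{algo: rt-abstract} actually uses the previous-frame probability $p_{k-1}$. I would identify $\hat{\psi}_T$ with State $1$ (labelled $\mathrm{true}$, meaning previous frames satisfied $\Phi$) and $\hat{\psi}_F$ with State $2$ (labelled $\mathrm{false}$), so that $\alpha = p_{k-1}$ and the transitions $P(0,1)=1-p_{k-1}$, $P(0,2)=p_{k-1}$ encode the two-point distribution $\Psi'$. With this identification, the inductive argument is that $p_{k-1}$ already carries forward the guarantee $\alpha$ from the previous frame, so replacing $\hat\Psi$ by $\Psi'$ loses no information relevant to the $\alpha\beta$ bound. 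Once the hypothesis is verified, the conclusion $\mathbb{P}\{\hat{\psi}L(s)\notin \mathrm{BadPref}(\psafe) \,|\,s \in \Bar{S} \mathrm{\,and\,} \hat{\psi}\in \Psi'\}\geq \alpha \beta$ follows immediately by applying Theorem \ref{thm1}.

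The main obstacle I anticipate is purely one of rigor in the conditioning, not difficulty: I must be careful that the single-state prefixes $\hat\psi_T$ and $\hat\psi_F$ are genuinely interchangeable with the full prefix set for the purpose of the bound, i.e.\ that the per-frame extension probability $\beta$ (governed by the current-frame detection likelihoods in lines 7--9) is independent of which specific historical prefix led to the ``good'' or ``bad'' summary state. The safety-property structure of Definition \ref{def: safe} is what licenses this: whether $\hat\psi L(s)$ is a bad prefix depends only on whether $\hat\psi$ was already bad and on the newly appended label $L(s)$, never on the internal detail of a non-bad $\hat\psi$. I would make this monotonicity/locality of $\mathrm{BadPref}$ explicit, since it is the one place where the collapse to a two-point summary could fail if the property were not a safety property.
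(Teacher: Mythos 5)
Your overall strategy coincides with the paper's: check that $\Psi'$ satisfies the hypothesis of \Cref{thm1}, observe that the $\beta$-condition is untouched by the replacement, and invoke the theorem. The gap is in the first step. You evaluate the conditional exactly as it is typeset in \Cref{thm1}, namely $\mathbb{P}\{\hat{\psi}\in\Psi' \mid \hat{\psi}\notin\mathrm{BadPref}(\psafe)\}$, and obtain $\alpha/\alpha = 1 \ge \alpha$. That quantity is identically $1$ no matter what the masses of $\hat{\psi}_T$ and $\hat{\psi}_F$ are, so your verification never actually uses the hypothesis $\mathbb{P}(\hat{\psi}_T)=\alpha$ and cannot transport the value $\alpha$ into the conclusion. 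Worse, if this literal reading of the hypothesis were the operative one, the theorem would apply for every $\alpha'\le 1$ simultaneously and would yield the bound $\ge\beta$; that bound is false whenever $\beta>\alpha$, because under a safety property every extension of the bad prefix $\hat{\psi}_F$ remains bad, so the left-hand side of the conclusion is at most $\alpha$. The condition that the proof of \Cref{thm1} actually consumes is the reversed conditional: the event $B=\{\hat{\psi}\notin\mathrm{BadPref}(\psafe)\mid\hat{\psi}\in\hat{\Psi}\}$ with $\mathbb{P}(B)\ge\alpha$. The paper's proof of \Cref{prop1} verifies precisely this: restricted to $\Psi'$, the only non-bad prefix is $\hat{\psi}_T$, so $\mathbb{P}(B)=\mathbb{P}(\hat{\psi}_T)=\alpha$, after which $\mathbb{P}(A)=\mathbb{P}(A\mid B)\,\mathbb{P}(B)\ge\beta\alpha$ goes through. (The statement of \Cref{thm1} and its own proof are notationally inconsistent on exactly this point; the proof's version is the sound one, and your argument needs to target it.)

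The remainder of your proposal is fine and in places more careful than the paper: your observation that, for a safety property, whether $\hat{\psi}L(s)$ is bad depends only on whether $\hat{\psi}$ was already bad and on the newly appended label $L(s)$ is exactly the justification the paper compresses into the sentence that the replacement ``does not affect'' the $\beta$-condition, and making it explicit is the right instinct. One small slip in your identification with Algorithm~\ref{algo: rt-abstract}: the algorithm sets $L(1)=\mathrm{false}$ and $L(2)=\mathrm{true}$ with $P(0,2)=p_{k-1}$, so $\hat{\psi}_T$ corresponds to State~2, not State~1.
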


\begin{proof}
    Since \(\mathbb{P}(\hat{\psi}_T) = \alpha\) and \(\mathbb{P}(\hat{\psi}_F) = 1-\alpha\), \(\prob\{\hat{\psi} \in \Psi' | \hat{\psi} \notin \mathrm{BadPref}(\psafe)\} = \alpha\). The replacement of \(\hat{\Psi}\) with \(\Psi'\) does not affect \(\mathbb{P}\{\hat{\psi}L(s)\notin \mathrm{BadPref}(\psafe) \,|\,\hat{\psi} \notin \mathrm{BadPref}(\psafe)\cap s \in \Bar{S}\}\).
    Thus, the conditions of Theorem 2 are satisfied, and \(\mathbb{P}\{\hat{\psi}L(s)\notin \mathrm{BadPref}(\psafe) \,| s \in \Bar{S} \cap \hat{\psi}\in \Psi'\} \geq \alpha \beta\). 
\end{proof}

From \Cref{thm1} and \Cref{prop1}, we can compute $\mathcal{PG}_k(\mathcal{A}_k \models \Phi)$ as follows:
\begin{equation}
    \label{eq: sequence-guarantee}
\begin{split}
    \mathcal{PG}_k(\mathcal{A}_k \models \Phi) &= \mathcal{PG}_{k-1}(\mathcal{A}_{k-1} \models \Phi) \\
    &\times \left( \sum_{\sigma \models \Tilde{\Phi}} \prod_{p \in \sigma} f_C(\vlm (\mathcal{F}_{k}, p)) \right)
\end{split}
\end{equation}

The video abstraction captures all previous frames in only two states (States 1 and 2 in Figure \ref{fig: example}) instead of accumulating states for every frame in the sequence. Thus, we can efficiently update the guarantee through a single computation. In Figure \ref{fig: example}, $\mathcal{PG}_k(\mathcal{A}_k \models \Phi) = 0.8 \times 0.7 = \mathbf{0.56}$.

\section{Robot Demonstrations}
\begin{figure*}[!t]
    \centering
    \includegraphics[width=\textwidth]{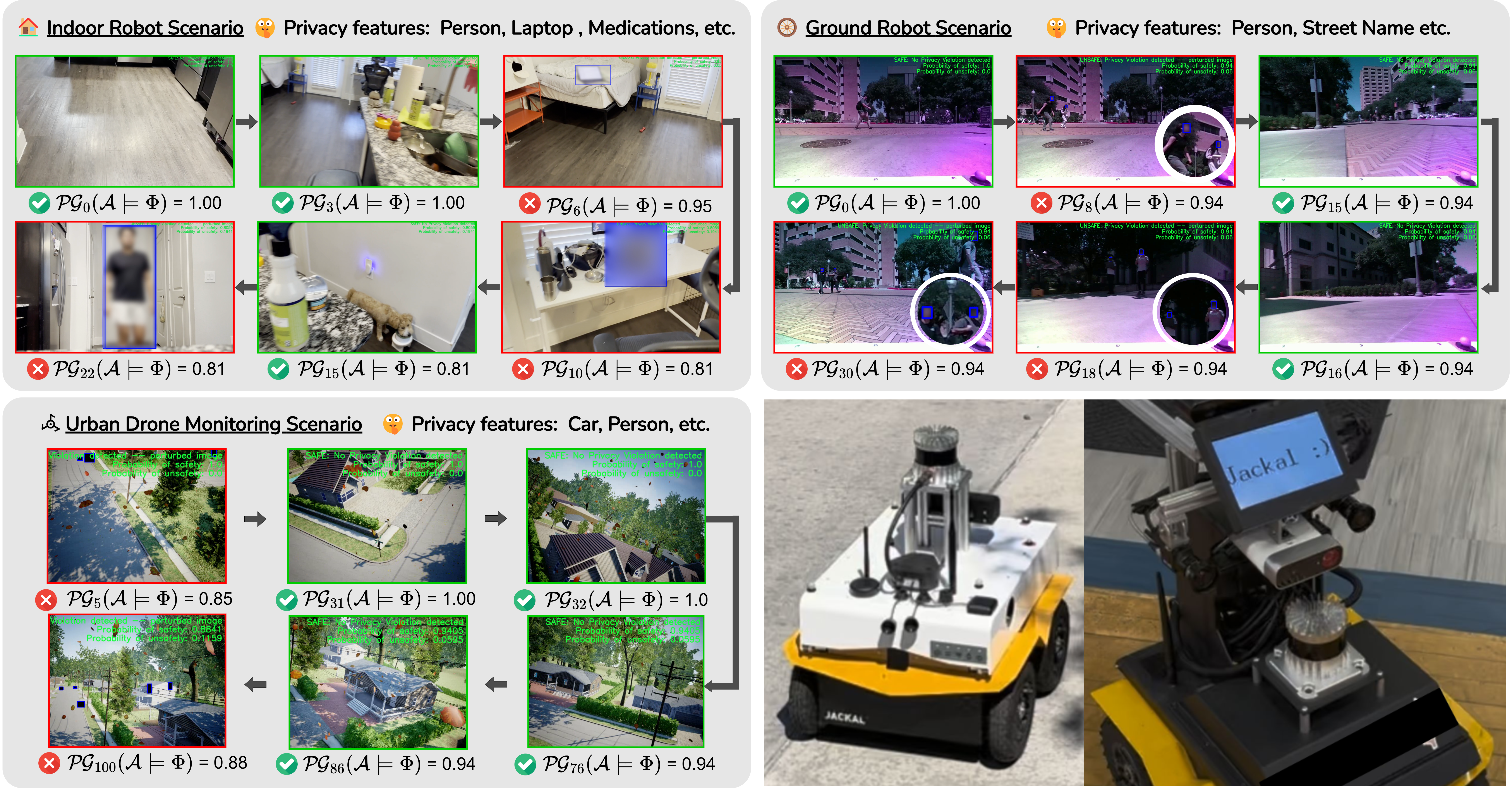}
    \caption{We present the demonstrations for indoor robot navigation (top left), ground robot navigation (top right), and urban drone monitoring (bottom left). The indoor robot and the ground robot are shown in the bottom right. Scenes with an `x' in a red circle contain privacy-sensitive objects and our method successfully conceals them.
    All demonstrations effectively maintain privacy above the user-given privacy threshold of 0.80, denoted as $\mathcal{PG}_k(\spec) > 0.80$.
    }
    \label{fig:real_robot_demonstration}
\end{figure*}

Our experiments assess our method in two areas: (i) its ability to protect privacy, and (ii) its efficiency in preserving other features for vision-based robot tasks.


We demonstrate our approach on a Jackal ground robot for autonomous driving, an indoor robot for in-house navigation and service, and a drone for urban monitoring (see \Cref{fig:real_robot_demonstration}). 
Given video streams from robot cameras, we aim to execute actions based on the control policy ($\pi$). Our approach effectively preserves privacy with formal guarantees without compromising performance in the real-time robot operation. We use YOLOv9 \cite{wang2024yolov9} in our method for all demonstrations. 

\paragraph{Indoor Navigation} 
In the first demonstration, we deploy \texttt{PCVS} to an indoor navigation robot to protect user privacy. We ground the robot in a private residence for in-house services such as transporting objects and house cleaning. While the robot perceives the environment through visual observations, we aim to preserve the privacy within such observations. The privacy specification is
\begin{equation*}
    \Phi_1 = \Box~(\neg~\text{laptop} \wedge \neg ~\text{medication} \wedge \neg ~\text{person}),
\end{equation*}
which requires hiding all people, laptops, and medications appearing in the scenes.

\Cref{fig:real_robot_demonstration} (top left) demonstrates how our method performs to conceal the sensitive objects such that the video satisfies the privacy specification. The probabilistic guarantee of privacy preservation throughout the complete operation is 0.81.

\begin{figure}
  \centering
    \begin{tikzpicture}[thick,scale=.6, node distance=2.2cm, every node/.style={transform shape}]

    \node[state,initial] (10) at (1, 0) {\Large $q_0$};
    \node[state] (11) at (0, 3) {\Large $q_{1}$};
    \node[state] (12) at (3, 3) {\Large $q_{2}$};
    \node[state] (13) at (6, 0) {\Large $q_{3}$};

     \draw[->, shorten >=1pt, sloped] (10) to[bend left] node[below, align=center] {\small reach \\ intersection} (11);

     \draw[->, shorten >=1pt, sloped] (11) to[bend left] node[above, align=center] {\small stop $\land$ observe} (12);

    \draw[->, shorten >=1pt, sloped] (12) to[left] node[below, align=center] {\small $\neg$ car $\land \neg $ person, \\ turn right} (13);
    \draw[->, shorten >=1pt, sloped] (12) to[loop above] node[above, align=center] {\small car $\vee$ person, \\ stop} ();
    \draw[->, shorten >=1pt, sloped] (13) to[loop above] node[above, align=center] {\small car $\vee$ person, \\ stop} ();
    \draw[->, shorten >=1pt, sloped] (13) to[left] node[above, align=center] {\small move forward} (10);
\end{tikzpicture}
    \caption{A sample control policy for the ground robot. Each transition is associated with an (input, output) tuple.}
    \vspace{-10pt}
    \label{fig:car_policy}
\end{figure}
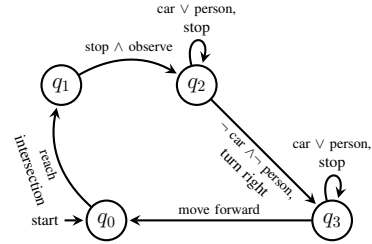

\paragraph{Ground Robot Driving} We deploy the control policy on the ground robot for five driving tasks, such as turning right at a stop sign (as presented in Figure \ref{fig:car_policy}). We embed \texttt{PCVS} in the robot's camera to conceal sensitive objects during real-time operation. 
In the driving example, we define a set of privacy specifications:
\begin{equation*}
    \begin{split}
\Phi_2 &= \Box~\neg ~\text{road sign},\\
\Phi_3 &= \Box~((\text{bicycle} \rightarrow \neg ~\text{person}) \vee (\text{person}\rightarrow \neg ~\text{face})),\\
\Phi_4 &= \Box~((\text{bus} \vee \text{car})\rightarrow \neg~\text{plate}).
    \end{split}
\end{equation*}
Intuitively, we want to conceal privacy-sensitive objects such as road name signs, car plates, and human faces. \Cref{fig:real_robot_demonstration} (top right) presents an example of how our method conceals sensitive objects to satisfy the specifications. The driving operation has a probabilistic guarantee of privacy preservation at 0.84, i.e., at least 84 percent of satisfying the specifications.

When concealing sensitive objects, it is crucial to ensure that such concealment does not adversely impact the robot’s decision-making processes. For instance, the robot should still be capable of detecting and avoiding pedestrians even after their faces have been obscured. More precisely, consider a safety specification:
\begin{equation*}
    \Phi_5 = \Box~((\text{car}\vee \text{person})\rightarrow \mathbf{X}~\text{stop},
\end{equation*}
which necessitates that the robot comes to a stop if cars or pedestrians are present ahead.

In the demonstration, the robot performs identically regardless of whether our method is deployed, and in both cases, it satisfies the safety specifications. Hence, we show that our privacy protection will not over-conceal non-sensitive objects and negatively impact the decision-making procedure in driving scenarios.

\begin{figure}[t]
    \centering\subcaptionbox{\label{fig:eval_privacy_preservation_by_length}}
    {
      \includegraphics[width=0.49\textwidth]{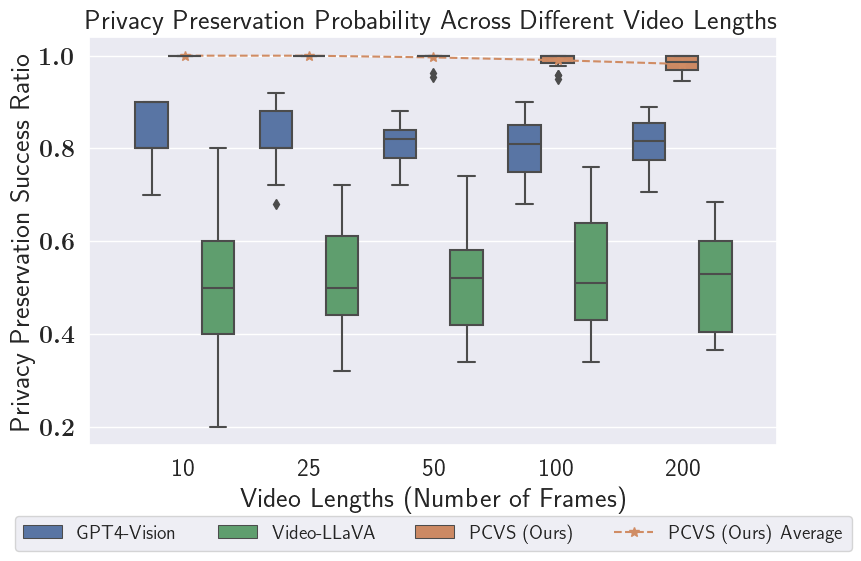}%
    }\vspace{-0.0em}  
    \subcaptionbox{\label{fig:eval_privacy_preservation_by_complexity}}
    {
      \includegraphics[width=0.48\textwidth]{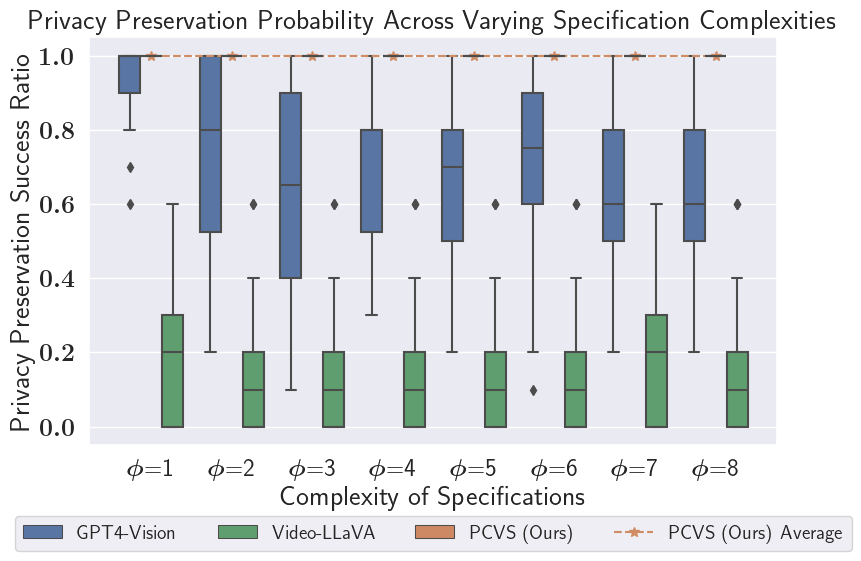}%
    }
    \caption{
    \textbf{\texttt{PCVS} effectively maintains privacy in long-horizon videos and complex privacy specifications}. In Figure \ref{fig:eval_privacy_preservation_by_length}, \texttt{PCVS} consistently preserves privacy, achieving an average Privacy Preservation Success Ratio of $0.97$ across various video lengths. In Figure \ref{fig:eval_privacy_preservation_by_complexity}, we show that \texttt{PCVS} consistently upholds privacy regardless of the complexity of specifications with an average Privacy Preservation Success Ratio of $0.94$.}
    \label{fig:eval_privacy_preservation}
    \vspace{-15pt}
\end{figure}

\paragraph{Urban Drone Monitoring} We demonstrate the applicability and effectiveness of our method in urban drone monitoring scenarios. The privacy specification is:
\begin{equation*}
    \Phi_6 = \Box~((\text{bicycle}\rightarrow \neg \text{person}) \wedge (\text{car} \vee \text{bus} \rightarrow \neg ~\text{person})),
\end{equation*}
which requires hiding all cars, buses, bicycles, and persons appearing in the scenes. 
\Cref{fig:real_robot_demonstration} (bottom left) demonstrates the successful performance of our method in concealing objects that are irrelevant to the monitoring task, yet require privacy preservation. The demonstration also indicates the real-time capability of our method to be seamlessly migrated to real-world drone applications.

\begin{figure}[t]
    \centering
    \includegraphics[width=\linewidth]{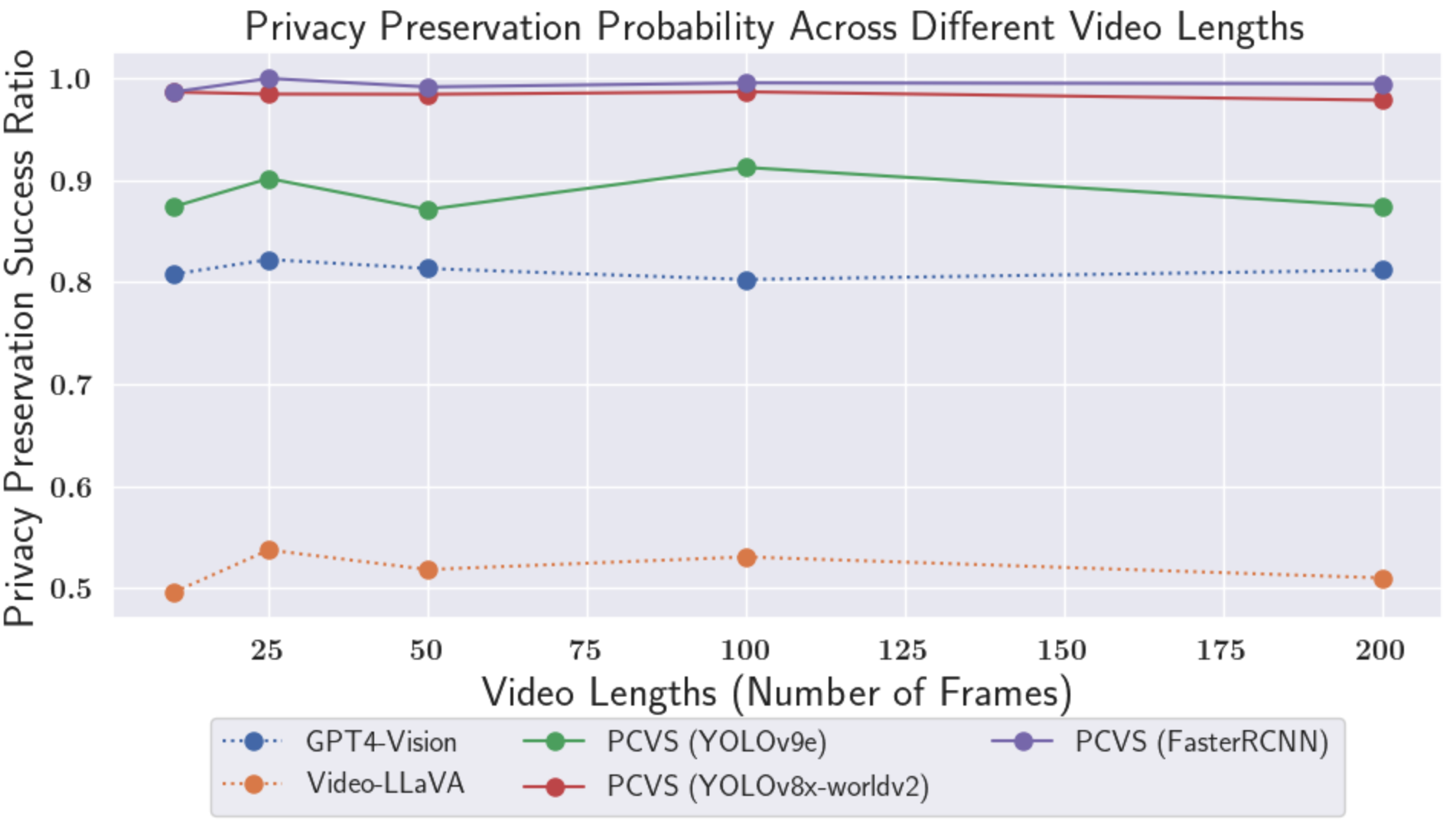}
    \caption{The comparisons between our method and other benchmarks and the comparisons between our method under different object detection models.}
    \label{fig: benchmarks}
\end{figure}

\section{Quantitative Analyses}

We present quantitative analyses in two areas: preserving privacy and preserving non-private visual features. We use YOLOv9 \cite{wang2024yolov9} on large-scale image datasets---ImageNet \cite{deng2009imagenet} and MS COCO \cite{cocodataset}---and a real-world driving dataset---UFPR-ALPR \cite{laroca2018robust}.

Our analyses show \texttt{PCVS} can preserve privacy even for long-horizon videos with complex privacy specifications. We define the \textbf{complexity of specifications}, $\phi$, as the \emph{number of propositions} in $\Phi$.
For instance, the complexity of a specification \( \Phi = \Box(\neg p_1 \wedge \neg p_2 \vee \neg p_3) \) with propositions \( AP = \{ p_1, p_2, p_3 \} \) is $\phi = 3$.

\textit{Evaluation Dataset I (ED 1)}\label{def:ed1}: We focus on the presence of ``person" in videos. We select images of a person from the ImageNet dataset and randomly insert these images at various positions for each video duration, filling any remaining slots with random images. 
We produce five different video lengths: 10, 25, 50, 100, and 200, with 25 video samples for each duration, resulting in 125 video samples overall.

\textit{Evaluation Dataset II (ED 2)}: We use the MS COCO dataset to evaluate our method at different complexities of specifications because it has multiple labels per image. Each level of specification complexity consists of 20 video samples, with an average of 50 frames per sample, resulting in a total of 160 videos. This dataset was developed using the same process as the ED1 dataset, with modifications made to accommodate the complexity of the specifications. For example, if $\phi = 3$, the privacy specification for the dataset is $\spec = \neg p_1 \wedge \neg p_2 \wedge \neg p_3$, where $p_1$, $p_2$, and $p_3$ are the ground truth labels of the selected image. These images are then randomly placed within the dataset, and the remaining slots are filled with random images. 

\textit{Evaluation Dataset III (ED 3)}: We randomly select a subset of images from the UFPR-ALPR dataset \cite{laroca2018robust} to generate videos with lengths 10, 25, 50, 100, and 200. Each length has 200 video samples (a total of 1000 videos). The dataset consists of labeled images that include driving-related objects such as vehicles, license plates, etc. We form a video by integrating a sequence of images from the dataset. 


\textit{Benchmarks}: To assess the ability of \texttt{PCVS} to detect privacy violations based on a given privacy specification, we use GPT-4 Vision \cite{openai2023gpt4vision} and Video LLaVA \cite{lin2023videollava} as benchmarks. This is because both benchmark methods can process a sequence of images from a video alongside a privacy specification. 

\begin{figure}[t]
    \centering
    \includegraphics[width=\linewidth]{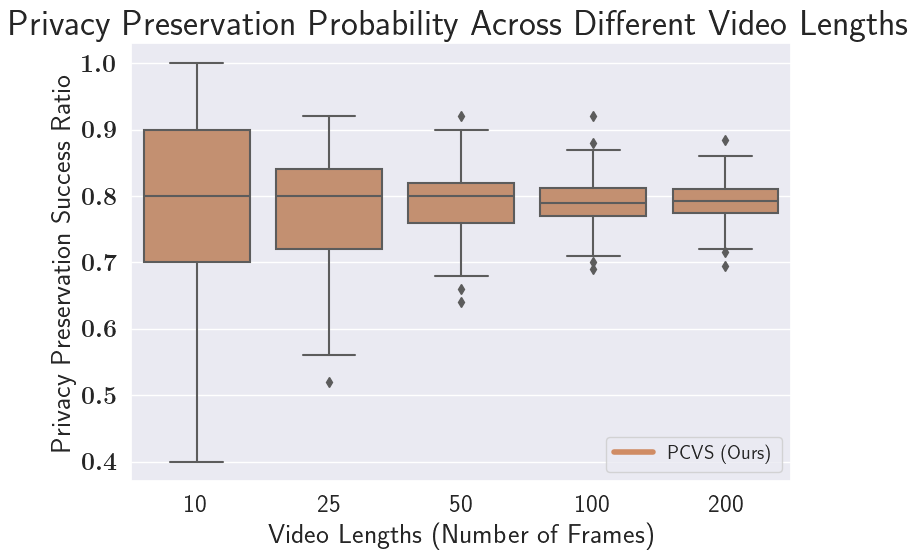}
    \caption{\textbf{Effective privacy preservation in real-world driving scenes.} We apply \texttt{PCVS} to driving scenes from the UFPR-ALPR dataset and conceal privacy-sensitive objects such as license plates. The privacy preservation success ratio of \texttt{PCVS} is consistently above 0.8.}
    \label{fig: privacy-preserving-alpr}
\end{figure}

\begin{figure}[t]
    \centering
    \includegraphics[width=0.45\linewidth]{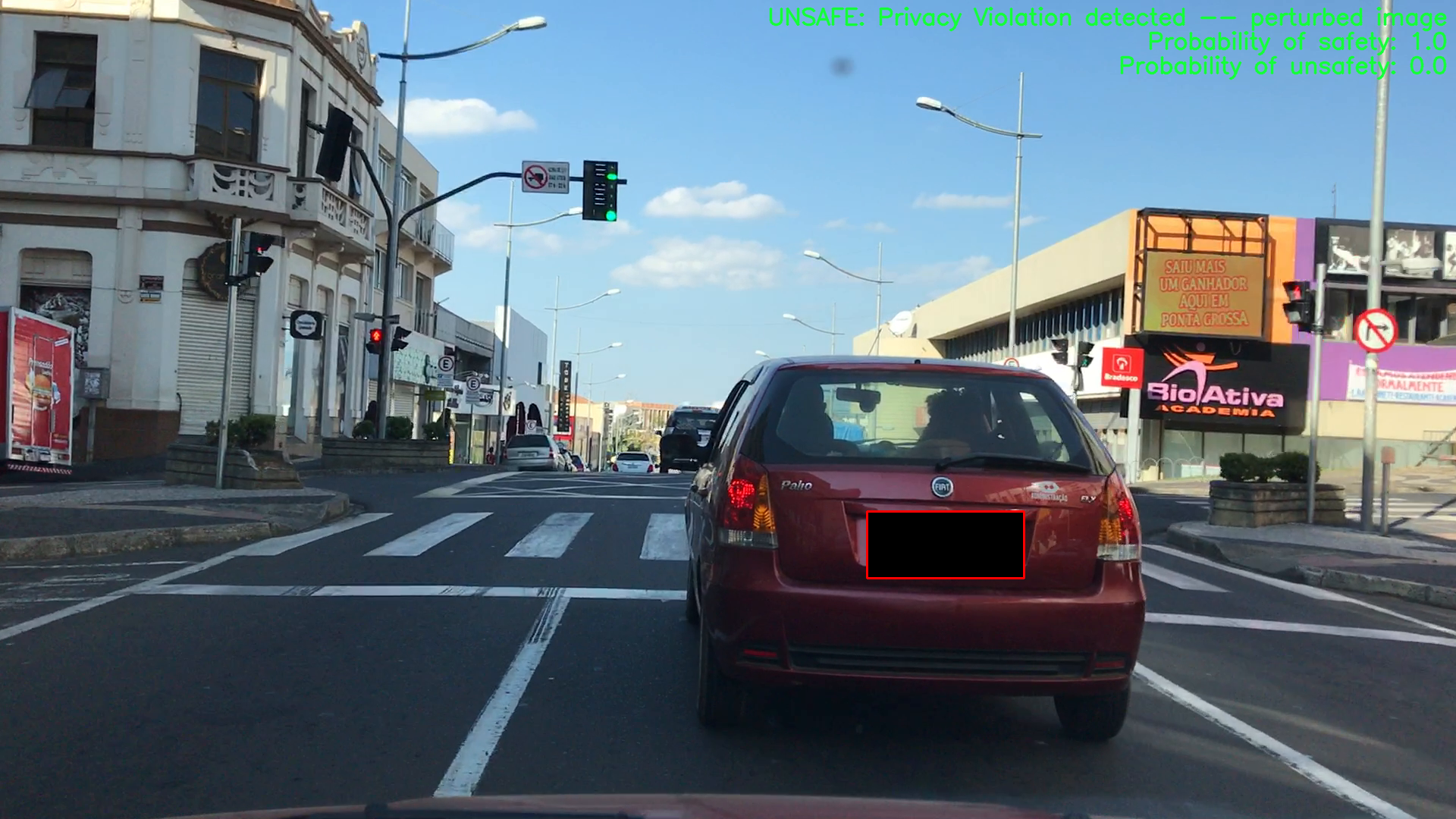}
    \includegraphics[width=0.45\linewidth]{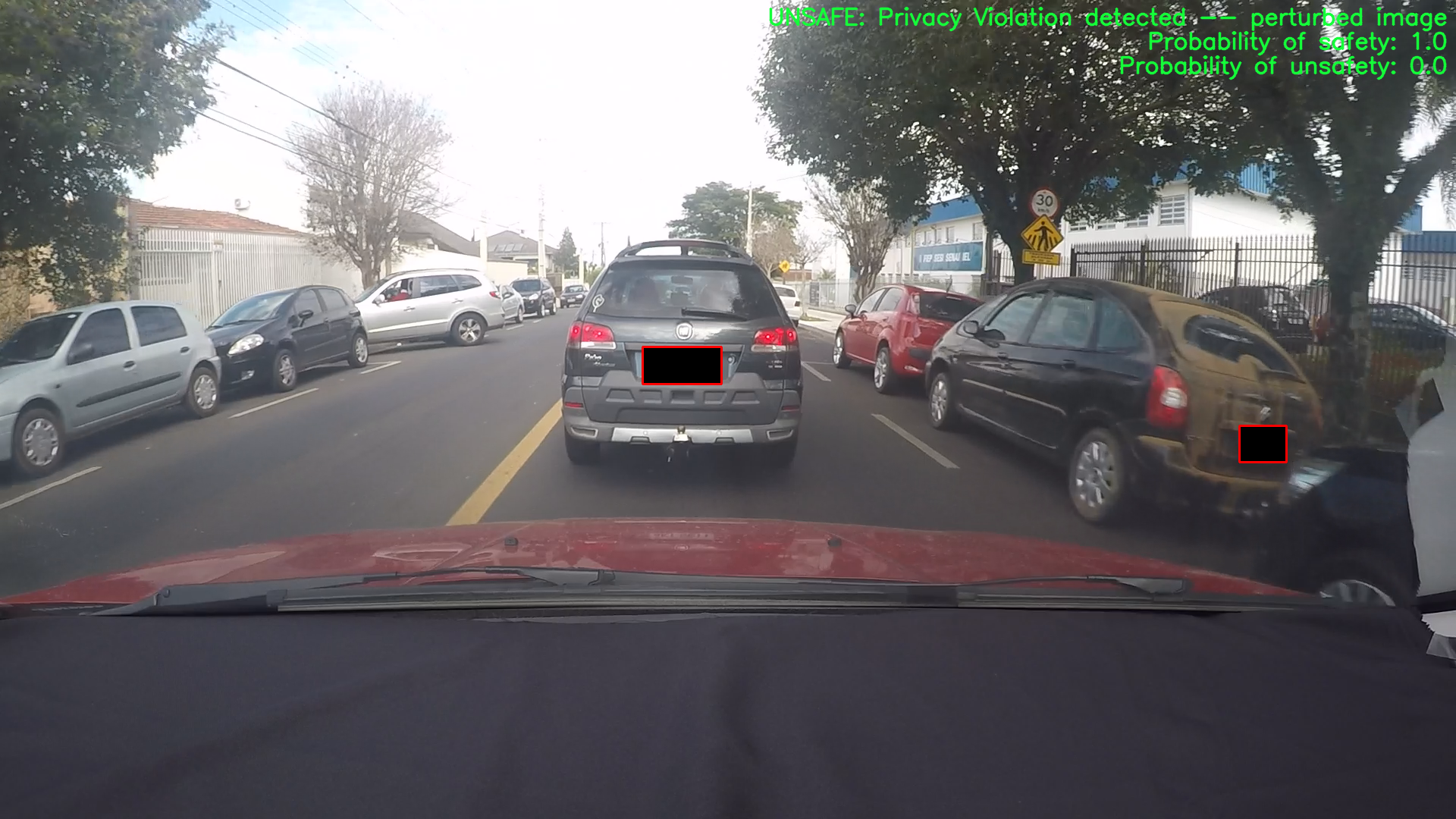}
    \includegraphics[width=0.45\linewidth]{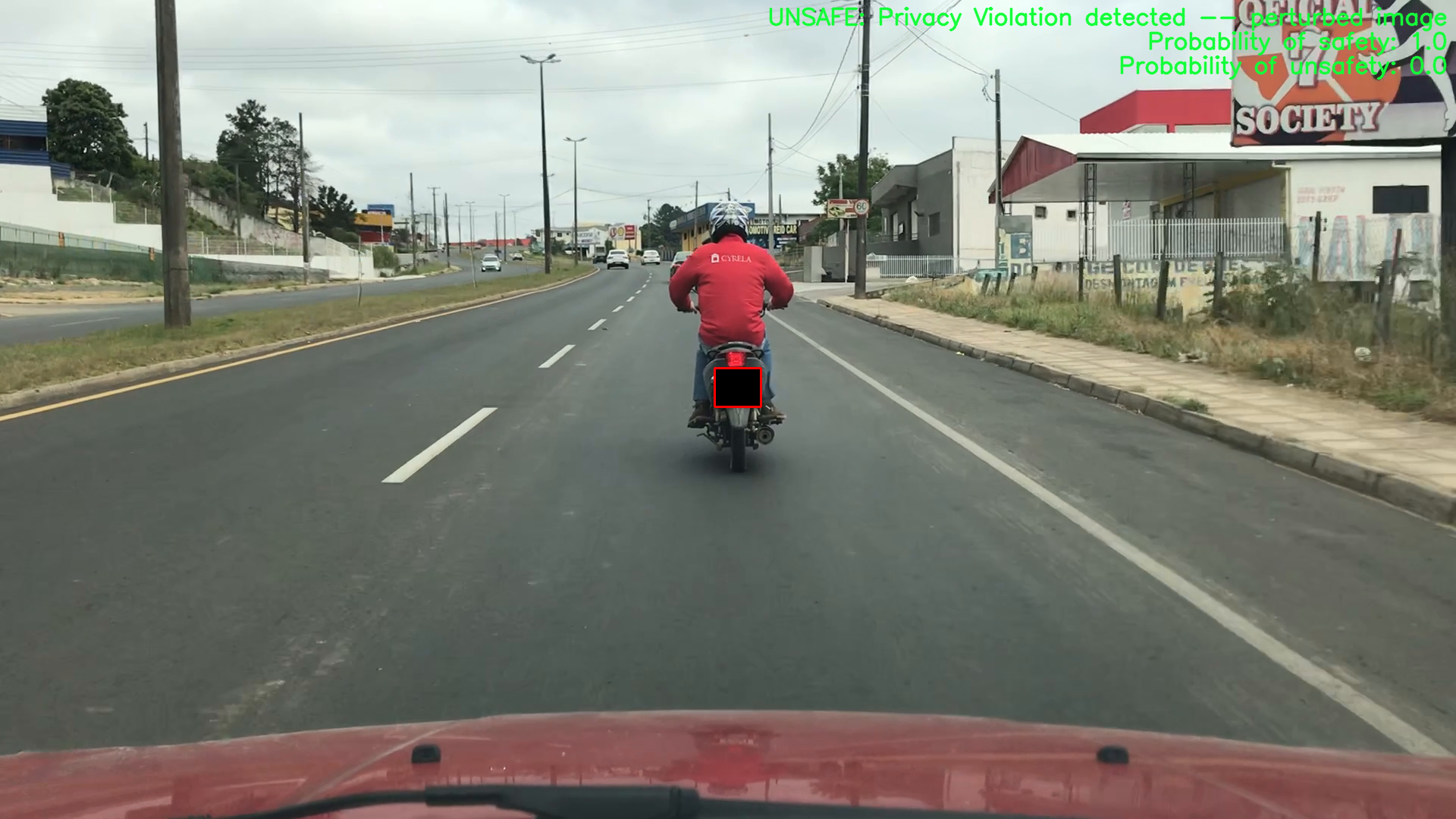}
    \includegraphics[width=0.45\linewidth]{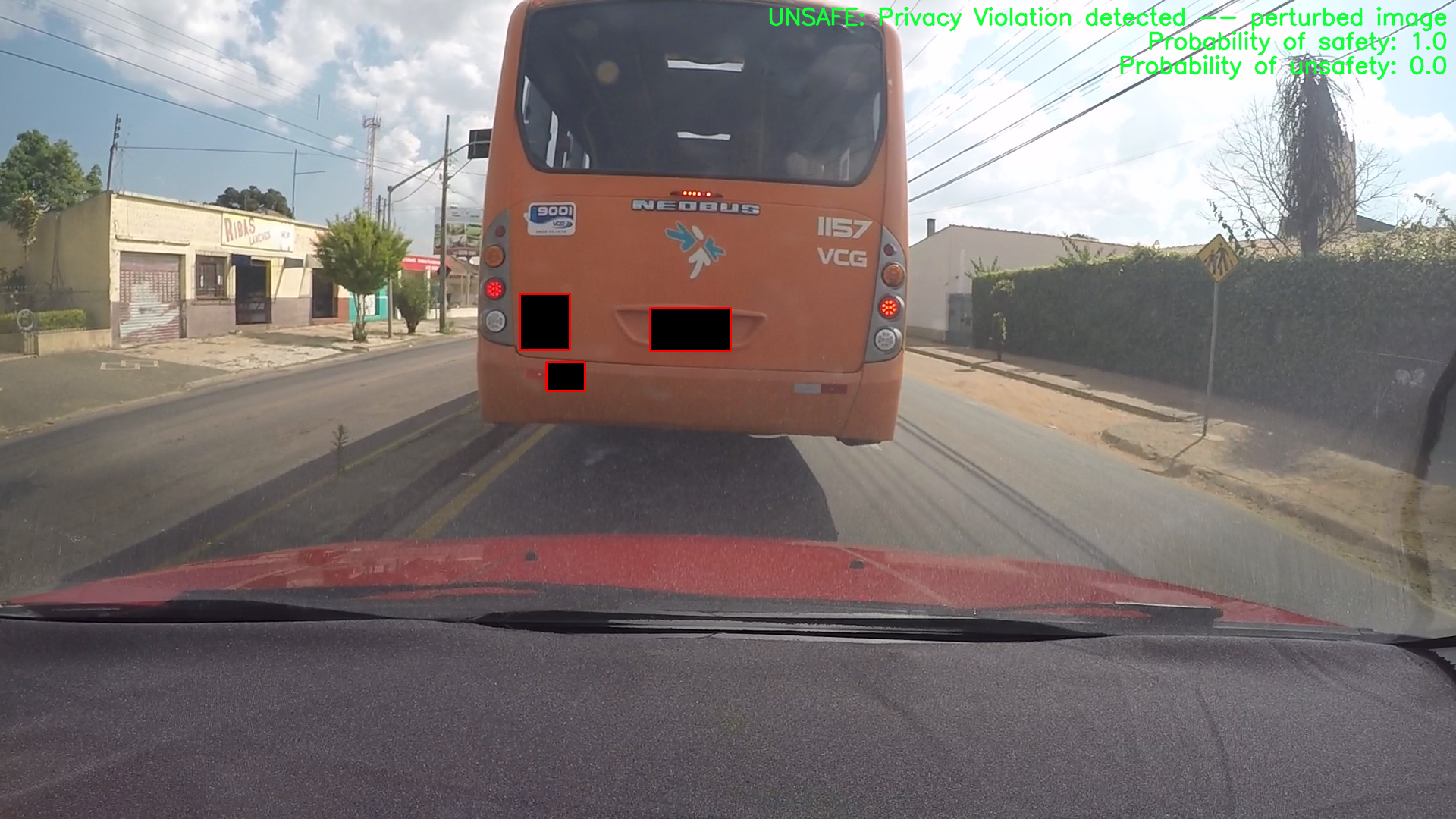}
    \caption{Demonstrations on our method concealing license plates in the driving scenes from the UFPR-ALPR dataset.}
    \label{fig: driving-data-demo}
    \vspace{-10pt}
\end{figure}


\subsection{Privacy Preservation}
We quantitatively evaluate the performance of privacy preservation across varying video lengths and specification complexities. For this evaluation, we define a metric representing the success ratio of privacy preservation as follows:
\begin{equation*}
\begin{split}
    \text{Privacy }&\text{Preservation Success Ratio} =\\ 
    &\frac{\text{Number of}\ p_i \in AP\ \text{detected or concealed}}{\text{Total number of private}\ p_i \in AP\ \text{within}\ \mathcal{V}}.
\end{split}
\end{equation*}
\texttt{PCVS} counts the number of concealed objects, while the benchmark counts the number of detected objects, as the benchmark does not natively conceal those objects. We assume that if those objects are detected, they can be concealed by other methods such as Gaussian concealing.

\paragraph{Comparison by video length} 
Our method ensures privacy preservation in live video streams, which means that the video length can be infinite. Hence, it is crucial to assess whether privacy is maintained as video streams become longer. To this end, we test \texttt{PCVS} on videos with various lengths from ED 1. We find that \texttt{PCVS} consistently maintains performance in preserving privacy, in contrast to benchmark methods that exhibit degraded performance as the length of videos increases (see Figure \ref{fig:eval_privacy_preservation_by_length}).

We also examine how the underlying vision model ability affects the privacy preservation ratio. We repeat the experiment on ED 1 while using different detection models: Yolov9e \cite{wang2024yolov9}, Yolov8x-worldv2 \cite{cheng2024yolow}, and FasterRCNN \cite{ren2016faster}, all with default parameters. We present the privacy preservation success ratio of our method using different detection models versus video lengths in Figure \ref{fig: benchmarks}. The results show that our method is sensitive to the detection model quality. Our method outperforms the benchmarks (GPT4-Vision and Video-LLaVA) at every video length when using the mainstream detection models.

To further demonstrate the real-world applicability of our method, we apply it to ED 3 and evaluate the privacy preservation success ratio across different video lengths. Recall that ED 3 consists of images collected from vehicle dash cameras. Figure \ref{fig: privacy-preserving-alpr} shows our method's high privacy preservation success ratio---consistently above 0.8 regardless of length. We present some demonstration figures in Figure \ref{fig: driving-data-demo}. The results indicate the applicability of our method to real-world tasks such as autonomous driving.

\begin{figure}
  \centering
    \includegraphics[width=\linewidth]{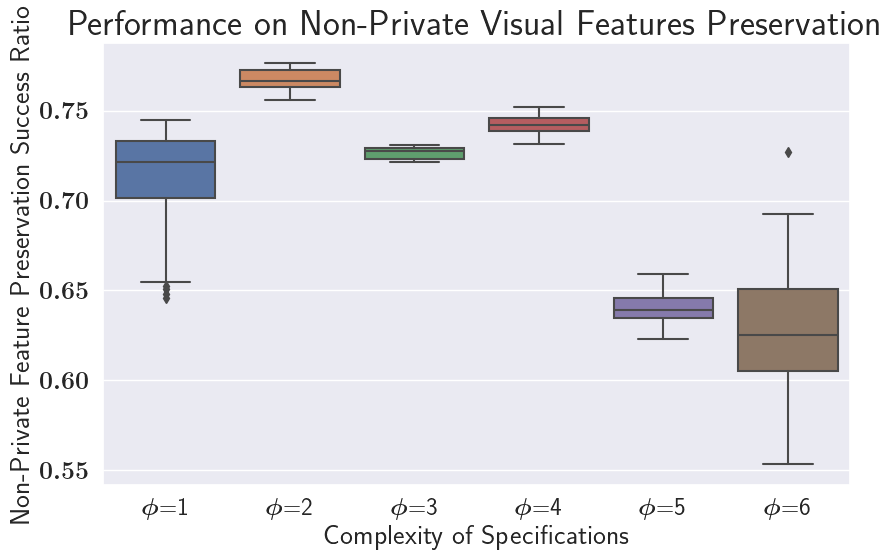}
    \caption{\textbf{Preserving non-private visual features for vision-based robot operation.} Our method can detect non-private objects after concealing private objects specified in $\spec$. However, performance degrades from $\phi=5$ because more private objects get concealed indirectly masking out other objects.}
    \vspace{-10pt}
    \label{fig:eval_non_privacy_preservation}
\end{figure} 

\paragraph{Comparison by specification complexities}
Next, we assess \texttt{PCVS} based on the complexity of specifications. This comparison is important because a privacy specification can be intricate, involving more than just two or three propositions. For example, a specification might require the detection and concealment of multiple privacy-sensitive objects within the same video, such as faces, license plates, and specific types of clothing. Our method significantly outperforms benchmark methods (see Figure \ref{fig:eval_privacy_preservation_by_complexity}) regardless of the complexity of specifications. We demonstrate that \texttt{PCVS} can effectively handle highly complex privacy compositions in real-time video streams, ensuring robust privacy protection.

\subsection{Non-private Visual Feature Preservation} 
Preserving non-private visual features is crucial for vision-based robot operation, as it relies on visual observation for control policies. In our demonstration (as presented in Figure \ref{fig:real_robot_demonstration}), the ground robot must be capable of identifying people from privacy-constrained video footage to make appropriate decisions, such as stopping. We analyze how our method preserves non-private visual features using ED2 in Figure \ref{fig:eval_non_privacy_preservation}. We define a metric that represents the success ratio of preserving non-private visual features as follows:
\begin{center}
    $\text{Non-Private Feature Preservation Success Ratio} = \frac{\text{Number of}\ \chi\ \text{detected after concealing} \ p_i \in AP}{\text{Total number of}\ \chi\ \text{within}\ \mathcal{V}},$
\end{center}
where $\chi$ is a non-private target object for detection. In our evaluation, non-private visual features remain preserved and detectable even after the concealment of privacy-sensitive objects as defined in the privacy specifications. However, the success ratio of non-private preservation decreases as the complexity of these specifications increases. This is because \texttt{PCVS} conceals a larger image area as the number of privacy-sensitive objects increases.

\begin{figure}[t]
    \centering
    \includegraphics[width=0.8\linewidth]{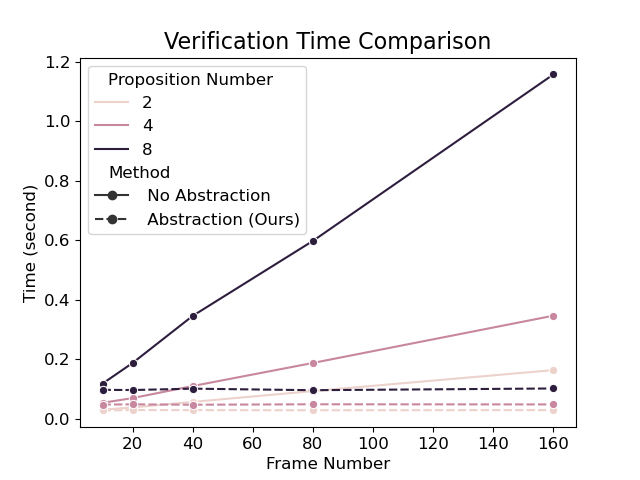}
    \caption{Comparison between the verification time with and without model abstraction. Our model abstraction significantly reduces the latency and remains constant latency when the proposition number increases.}
    \label{fig: verification-time}
    \vspace{-10pt}
\end{figure}

\subsection{Computational Complexity}
Model checking often incurs significant computational overhead, particularly as the state space size increases with the number of video frames, which limits its capability in real-time applications. Hence, we develop an abstraction method to resolve this limitation. 

Figure \ref{fig: verification-time} shows the verification time versus the frame number under different numbers of atomic propositions. The experiments are performed using the video collected by the ground robot under an \emph{Apple M2 CPU}. As the frame number increases, the verification time without our video abstraction method grows linearly, while the time with abstraction remains constant. 

Furthermore, we tested our method with YOLOv9 on both \emph{Intel Xeon gold} CPU and \emph{Nvidia A5000} GPU. The average runtimes for processing one proposition in one image frame (1600x900 pixels) using CPU and GPU are 159 milliseconds and 69 milliseconds. Therefore, a robot with a CPU can also preserve privacy in real-time at an approximate 6 frames per second (fps) frequency and a robot with a GPU is capable of videos with 14 fps. We present more details in Figure \ref{fig: step-time}.

\begin{figure}[t]
    \centering
    \includegraphics[width=0.75\linewidth]{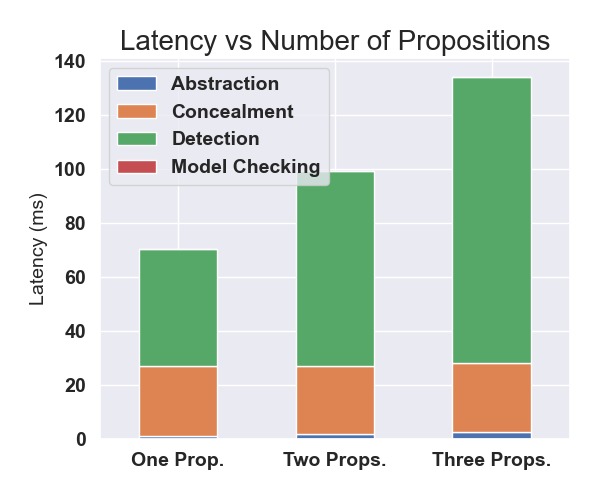}
    \caption{Latency comparison in each stage of our method to process one frame. Object detection takes the major proportion of the time and the time grows linearly as the number of propositions increases. The time for other stages is negligible or remains constant across different numbers of propositions.}
    \label{fig: step-time}
    \vspace{-10pt}
\end{figure}

\section{Conclusion}
\paragraph{Summary}
We propose \texttt{PCVS}, a method to protect privacy in a live video stream that is either generated from robotic tasks or fed to robot learning algorithms with a probabilistic guarantee of the privacy specification being satisfied. Our method significantly outperforms state-of-the-art methods for short and long video sequences. Further, we show the real-time capabilities of our method on three robotic applications. 

\paragraph{Limitations and Future Work} Our method is limited by the capabilities of the VLM that we use. We are currently unable to process specifications that are action-based in nature (e.g.,, remove humans who are seen eating in a video) because of the limited performance of the VLM (in the experiment) in action recognition. 
A future direction is to address this limitation by integrating models that process multiple frames at a time to detect whether an action has occurred. Another future direction is extending our privacy specifications to generic specifications that can describe more properties besides safety, such as liveness and fairness, improving the generalizability of our method.


\clearpage




\bibliographystyle{plainnat}
\bibliography{references}

\end{document}